\documentclass[11pt,a4paper]{article}

\usepackage[utf8]{inputenc}
\usepackage[T1]{fontenc}
\usepackage[margin=1in]{geometry}
\usepackage{amsmath,amssymb,amsfonts,amsthm,mathtools}
\usepackage{booktabs}
\usepackage{microtype}
\usepackage{natbib}
\usepackage{enumitem}
\usepackage{float}
\usepackage{hyperref}

\hypersetup{
    colorlinks=true,
    linkcolor=blue,
    filecolor=magenta,
    urlcolor=cyan,
    citecolor=blue,
}

\theoremstyle{plain}
\newtheorem{theorem}{Theorem}
\newtheorem{lemma}{Lemma}
\newtheorem{corollary}{Corollary}

\theoremstyle{definition}

\newtheorem{assumption}{Assumption}

\DeclareMathOperator{\Prb}{P}
\DeclareMathOperator{\E}{E}

\title{\textbf{The Need for an External Observer\\
Formalizing the Sufficiency Gap:\\
A Mathematical Extension of Mixture Identifiability and Contextual Grounding in Sequence Models}}
\author{\textbf{Francesco Corielli}\thanks{This work serves as an extension and companion framework to Corielli (2026).}}
\date{\today}

\begin{document}

\maketitle

\begin{abstract}
This paper provides a formal companion model to \citet{corielli2026}. It  gives a minimal binary mixture model in which several familiar limitations of next-token prediction can be derived in closed form. The model shows that even an ideal infinite-capacity predictor of the text-only marginal law can become overconfident when the observed prefix is compatible with the wrong latent regime. The resulting entropy difference is not an ordinary optimization error; it is a sufficiency gap caused by marginalization over an unobserved state. We then formalize retrieval, tool use, and external grounding through an auxiliary binary signal with fidelity $\gamma \in [1/2,1]$. The resulting Bayesian update yields a contextual dominance threshold: a corrective signal reverses the posterior odds induced by the textual history exactly when its fidelity exceeds the text-only posterior weight assigned to the misleading regime. This threshold reduces, but does not generally eliminate, the sufficiency gap; complete closure requires perfect revelation of the relevant latent state or an equivalent verification mechanism. The contribution is therefore a formal clarification and synthesis: external grounding helps not because it magically adds intelligence to next-token prediction, but because it can reduce residual dependence on latent circumstances when the signal is informative and the model has learned how to use it.
\end{abstract}

\section{Introduction}

A common description of language modeling is that models estimate or approximate a conditional distribution over the next token given the observed textual prefix, a view rooted in statistical language modeling and information-theoretic accounts of language prediction \citet{shannon1948,shannon1951,manning1999,rosenfeld2000}. This description is statistically meaningful only under additional assumptions. In neural language modeling, including early neural probabilistic models and later transformer-based systems, a training corpus consists of realized token trajectories, not directly observed conditional laws \citet{bengio2003,vaswani2017,radford2019,brown2020}. Moreover, human language production is conditioned not only on previous words, but also on facts, intentions, goals, institutions, beliefs, physical circumstances, and task-specific constraints; this point is closely related to the distinction between form and grounded meaning emphasized by \citet{bender2020}. \citet{corielli2026} distinguishes three objects that are often conflated: the full conditional language process, the marginal text-only law obtained by integrating out latent circumstances, and the model-induced distribution learned from finite corpora.

This paper translates that distinction into a compact analytical example. We construct a binary data-generating process composed of two regimes. In one regime, the observable text sequence is deterministic and strictly alternating. In the other, it is an independent fair coin. The latent regime is not directly observed. An ideal text-only predictor therefore infers a posterior probability over regimes from the observed prefix and predicts from the resulting mixture conditional. This model can be statistically optimal relative to the text-only marginal distribution and still be epistemically misleading in the actual latent state.

The central object of analysis is a \emph{sufficiency gap}: the gap between the entropy of the true continuation conditional on the relevant latent state and the entropy of the text-only marginal prediction. In the broader language of \citet{corielli2026}, such gaps become operationally dangerous in heterogeneous, historically sampled, and only partially ergodic archives. Nevertheless, the mathematical point in the present paper is sharper: the failure can arise even after perfect recovery of the marginal text-only law. The problem is not merely insufficient model capacity or poor optimization. It is the loss of latent-state information under marginalization.

The paper then introduces an auxiliary grounding signal. This signal represents retrieval-augmented generation, tool use, programmatic checks, human review, or any external observer that supplies evidence about the omitted latent state. Its fidelity is parameterized by $\gamma \in [1/2,1]$. The resulting posterior update gives a closed-form contextual dominance threshold. If the grounding signal is too noisy relative to the text-only posterior induced by the misleading prefix, the archive-induced structural bias remains dominant. If the signal is sufficiently precise, it reverses posterior odds and reduces the sufficiency gap. Full gap closure, however, requires perfect or effectively perfect access to the latent state.

The contribution is deliberately architecture-neutral. The analysis does not depend on transformers, parameter counts, reinforcement learning, or any particular decoding system. It applies to any predictor trained on realized sequences and evaluated through a text-only or augmented conditional distribution.

\subsection*{Relation to existing literature and contribution}

Many ingredients of this argument have precedents. Prior work has examined the distinction between linguistic form and grounding \citet{bender2020}; hallucination, factuality, and model uncertainty \citet{ji2023,kadavath2022}; retrieval-augmented language modeling \citet{lewis2020,borgeaud2022}; tool use and neuro-symbolic orchestration \citet{karpas2022,schick2023,yao2023,gao2023}; in-context learning and Bayesian or latent-variable interpretations of prompting \citet{brown2020,min2022,xie2022}; instruction tuning and alignment interfaces \citet{ouyang2022}; and recursive training on generated data or model collapse \citet{shumailov2023,shumailov2024,alemohammad2024}. 

\citet{corielli2026}, obviously, does not purport discovery of these phenomena. What \citet{corielli2026} stresses, is that they can be better understood as manifestations of a common statistical structure: prediction from a learned text-only marginal law is epistemically useful only when the observed and augmented context is sufficiently informative about the latent circumstances relevant to continuation.

The binary model presented here is therefore diagnostic rather than exhaustive. It is not intended to represent the full complexity of modern LLMs, which involve finite data, finite capacity, optimization error, alignment tuning, distribution shift, decoding policies, retrieval failures, and user-interface effects. Its purpose is to isolate one mechanism that remains even after all ordinary estimation error is removed: a predictor can be correct relative to the text-only marginal law and still be misleading relative to the realized latent state.

\section{Background: Three Distinct Distributions}

Let $X_t$ denote the token at position $t$, and let
\[
X_{\le t}=(X_1,\ldots,X_t)
\]
be the observed textual history. Let $Z_t$ denote the non-textual circumstances relevant to producing or evaluating the next token. These may include the world state, facts, speaker beliefs, goals, intentions, task constraints, institutional context, and other latent variables.

Following \citet{corielli2026}, it is useful to distinguish three distributions.

\begin{enumerate}[label=(\roman*)]
    \item The \textbf{full conditional process}
    \[
    p_{\mathrm{full}}(x_{t+1}\mid x_{\le t},z_t),
    \]
    which conditions on both text and latent circumstances.

    \item The \textbf{marginal text-only law}
    \[
    p_{\mathrm{marg}}(x_{t+1}\mid x_{\le t})
    =\int p_{\mathrm{full}}(x_{t+1}\mid x_{\le t},z)\,p(z\mid x_{\le t})\,dz,
    \]
    which averages over latent circumstances conditional on text.

    \item The \textbf{model-induced predictive distribution}
    \[
    p_\theta(x_{t+1}\mid x_{\le t}),
    \]
    which is the distribution actually represented by a trained model.
\end{enumerate}

The present paper assumes an idealized limit in which the model-induced distribution exactly recovers the relevant text-only marginal law, abstracting from finite-sample estimation and optimization issues familiar from statistical learning theory \citet{bishop2006}. This removes finite-data, architecture, and optimization failures from the analysis and isolates the conditional-independence issue rather than ordinary estimation error. If a gap remains under this idealization, it is structural rather than merely empirical.

The usefulness condition is local sufficiency. The textual prefix is sufficient for continuation when
\begin{equation}
X_{t+1}\perp Z_t\mid X_{\le t},
\end{equation}
or approximately when
\begin{equation}
I(X_{t+1};Z_t\mid X_{\le t})\approx 0.
\end{equation}
If this condition fails, a correct text-only marginal conditional may remain a poor guide to the actual situation-specific conditional law.

\section{A Binary Mixed-Regime Process}

We now define a minimal process that makes the sufficiency failure explicit. Let the latent regime process $\{Z_t\}_{t\ge 1}$ take values in $\{0,1\}$. We index the latent variable as $Z_{t+1}$ when it governs the emission of $X_{t+1}$. This is notationally equivalent to using $Z_t$ as the continuation-relevant latent state after a one-step reindexing.

Let $\mathcal V=\{0,1\}$ be the vocabulary and let $\mathcal H_t=\sigma(X_1,\ldots,X_t)$ denote the textual filtration. Assume the latent process is a homogeneous Markov chain with symmetric retention probability $\rho\in(1/2,1)$, using standard stochastic-process notation \citet{doob1953}:
\begin{equation}
\Prb(Z_{t+1}=i\mid Z_t=i)=\rho,\qquad i\in\{0,1\}.
\end{equation}
The exact value of $\rho$ is not central to the entropy calculations below; it provides a simple source of regime persistence. The broader interpretation of empirical corpus frequencies as stable process information relies on ergodic-type assumptions of the kind classically formalized by \citet{birkhoff1931}.

The full conditional law is defined as follows.

\begin{description}[leftmargin=1.2cm]
    \item[Regime 0: deterministic textual regime.] The next token alternates with certainty:
    \begin{equation}
    \Prb(X_{t+1}=x\mid \mathcal H_t,Z_{t+1}=0)
    =\mathbf 1_{\{x=1-X_t\}}.
    \end{equation}

    \item[Regime 1: random latent regime.] The next token is an independent fair coin:
    \begin{equation}
    \Prb(X_{t+1}=x\mid \mathcal H_t,Z_{t+1}=1)=\frac12,
    \qquad x\in\{0,1\}.
    \end{equation}
\end{description}

The ideal text-only predictor recovers the marginal conditional law by integrating over the posterior distribution of the latent regime:
\begin{align}
 p_{\mathrm{marg}}(X_{t+1}=x\mid\mathcal H_t)
 &=\sum_{k\in\{0,1\}}
 \Prb(Z_{t+1}=k\mid\mathcal H_t)
 \Prb(X_{t+1}=x\mid\mathcal H_t,Z_{t+1}=k).
\end{align}
Define
\begin{equation}
\pi_{t,0}:=\Prb(Z_{t+1}=0\mid\mathcal H_t).
\end{equation}
For the alternating continuation $x=1-X_t$, the marginal law becomes
\begin{equation}
\label{eq:p-marg-alt}
 p_{\mathrm{marg}}(X_{t+1}=1-X_t\mid\mathcal H_t)
 =\pi_{t,0}+\frac12(1-
\pi_{t,0})=\frac12(1+\pi_{t,0}).
\end{equation}

\begin{assumption}[Full support of the misleading regime]
\label{ass:full-support}
The prior over regimes has full support and the observed alternating history has positive likelihood under Regime 0. Hence $\pi_{t,0}>0$ whenever the observed history is compatible with Regime 0.
\end{assumption}

This assumption avoids treating the deterministic regime as impossible after observing an alternating prefix. It is the minimal support condition needed for the text-only posterior to place positive weight on the misleading structural explanation.

\section{Entropy of the Text-Only Mixture}

Let $H_2:[0,1]\to[0,1]$ denote binary entropy in bits \citet{cover2006}:
\begin{equation}
H_2(p)=-p\log_2 p-(1-p)\log_2(1-p).
\end{equation}

\begin{lemma}[Entropy of the text-only mixture]
\label{lem:entropy-bounds}
Let $\alpha=(1+\pi)/2$ for $\pi\in[0,1]$. Then $H_2(\alpha)=1$ if and only if $\pi=0$. Moreover, $H_2(\alpha)$ is strictly decreasing in $\pi$ on $(0,1)$.
\end{lemma}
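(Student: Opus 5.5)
The argument rests on two elementary properties of binary entropy. First, $H_2$ attains its maximum value $1$ on $[0,1]$ uniquely at $p=1/2$. Second, $H_2$ is strictly decreasing on $[1/2,1]$. Since the map $\pi\mapsto\alpha=(1+\pi)/2$ is a strictly increasing affine bijection from $[0,1]$ onto $[1/2,1]$, both claims of Lemma~\ref{lem:entropy-bounds} follow by composition.

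I will establish the two properties by differentiation. On $(0,1)$,
\[
H_2'(p)=\log_2\frac{1-p}{p},\qquad H_2''(p)=-\frac{1}{p(1-p)\ln 2}<0 .
\]
So $H_2$ is strictly concave on $[0,1]$; it is continuous at the endpoints with $H_2(0)=H_2(1)=0$ under the convention $0\log_2 0=0$. Its derivative vanishes only at $p=1/2$, where $H_2(1/2)=1$. Hence $1/2$ is the unique maximizer, and $H_2(p)<1$ for every $p\neq 1/2$, including the endpoints.

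For the equivalence, if $\pi=0$ then $\alpha=1/2$ and $H_2(\alpha)=1$. Conversely, if $H_2(\alpha)=1$, uniqueness of the maximizer forces $\alpha=1/2$, hence $\pi=0$.

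For monotonicity, the chain rule gives
\[
\frac{d}{d\pi}H_2\!\left(\tfrac{1+\pi}{2}\right)=\tfrac12\log_2\frac{1-\pi}{1+\pi}
\]
for $\pi\in(0,1)$. The ratio $(1-\pi)/(1+\pi)$ lies strictly between $0$ and $1$, so this derivative is strictly negative on $(0,1)$. By the mean value theorem, $H_2(\alpha)$ is strictly decreasing in $\pi$ there. By continuity, the monotonicity in fact extends to the closed interval $[0,1]$, with $H_2(\alpha)$ falling from $1$ at $\pi=0$ to $0$ at $\pi=1$.

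No step is a genuine obstacle. The only care needed is at the boundary $\pi=1$ (equivalently $\alpha=1$), where the logarithm is undefined. I handle this by the $0\log 0=0$ convention and continuity rather than by differentiation, which is sufficient because the lemma asserts strict decrease only on the open interval.
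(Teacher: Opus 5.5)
Your proof is correct and follows the paper's route: the same chain-rule derivative $\tfrac12\log_2\frac{1-\pi}{1+\pi}$ and its sign on $(0,1)$ give the monotonicity claim. The only variation is that you get the ``only if'' direction from the uniqueness of the maximizer of $H_2$ (via strict concavity) rather than from strict monotonicity in $\pi$. This has the small advantage of handling the endpoints $\pi=0$ and $\pi=1$ explicitly, where the paper's ``strict monotonicity gives $H_2(\alpha)<1$'' tacitly relies on continuity.
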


\begin{proof}
By the chain rule,
\begin{equation}
\frac{d}{d\pi}H_2\!\left(\frac{1+\pi}{2}\right)
=\frac12\log_2\left(\frac{1-\alpha}{\alpha}\right)
=\frac12\log_2\left(\frac{1-\pi}{1+\pi}\right).
\end{equation}
For $\pi\in(0,1)$, the ratio $(1-\pi)/(1+\pi)$ lies in $(0,1)$, so the derivative is strictly negative. At $\pi=0$, $\alpha=1/2$ and $H_2(1/2)=1$. For $\pi>0$, strict monotonicity gives $H_2(\alpha)<1$.
\end{proof}

\section{Theorem 1: The Sufficiency Gap and False Authority}

\begin{theorem}[Sufficiency gap under latent-regime mismatch]
\label{thm:sufficiency-gap}
Suppose the true continuation regime is Regime 1, so that $Z_{t+1}=1$, but the realized textual history $\mathcal H_t$ is compatible with a deterministic alternating sequence and satisfies Assumption~\ref{ass:full-support}. Then the predictive entropy of the ideal text-only marginal model is strictly lower than the true conditional entropy:
\begin{equation}
\Delta H
:=H(p_{\mathrm{full}}\mid\mathcal H_t,Z_{t+1}=1)
-H(p_{\mathrm{marg}}\mid\mathcal H_t)>0.
\end{equation}
\end{theorem}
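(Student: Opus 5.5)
The plan is to compute both entropies in closed form and then apply Lemma~\ref{lem:entropy-bounds}. First, the true conditional law. Under $Z_{t+1}=1$ the full conditional is a fair coin regardless of $\mathcal H_t$, so
\[
H(p_{\mathrm{full}}\mid\mathcal H_t,Z_{t+1}=1)=H_2(1/2)=1 .
\]
Second, the text-only law. By \eqref{eq:p-marg-alt}, the ideal text-only predictor puts mass $\alpha_t=\tfrac12(1+\pi_{t,0})$ on the alternating continuation $1-X_t$ and the complementary mass on $X_t$. Its predictive entropy is therefore $H_2(\alpha_t)$. We then have $\Delta H=1-H_2(\alpha_t)$.

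The remaining task is to show $\pi_{t,0}>0$. The hypothesis states that $\mathcal H_t$ is compatible with the deterministic alternating sequence, and Assumption~\ref{ass:full-support} then gives $\pi_{t,0}>0$ directly. If one prefers not to lean on the assumption as stated, positivity can be verified by Bayes' rule along the Markov chain:
\[
\pi_{t,0}\propto\sum_{z_t}\Prb(Z_{t+1}=0\mid Z_t=z_t)\,\Prb(Z_t=z_t,\mathcal H_t).
\]
Every transition probability lies in $\{\rho,1-\rho\}\subset(0,1)$, the prior has full support, and an alternating prefix has positive likelihood under the all-zero regime path. Hence the joint weight of $\{Z_{t+1}=0\}$ and $\mathcal H_t$ is strictly positive, while the normalizer is finite and positive.

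Finally, I apply Lemma~\ref{lem:entropy-bounds} with $\pi=\pi_{t,0}\in(0,1]$. The lemma's ``if and only if'' clause gives $H_2(\alpha_t)\neq1$. Since $H_2\le1$ always, this yields $H_2(\alpha_t)<1$ and hence $\Delta H>0$. The endpoint $\pi_{t,0}=1$ is handled by the same clause, where $H_2(1)=0$, so strict monotonicity on the open interval is not needed there. I could also note for interpretation that $\Delta H=1-H_2\!\left(\tfrac{1+\pi_{t,0}}{2}\right)$ is increasing in $\pi_{t,0}$, but this is not required for the strict inequality.

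The main obstacle is purely interpretive: making precise that ``$H(p_{\mathrm{marg}}\mid\mathcal H_t)$'' denotes the entropy of the one-step predictive distribution evaluated at the realized history, and not a conditional entropy averaged over histories. Once that reading is fixed, the argument is immediate from \eqref{eq:p-marg-alt} and the lemma.
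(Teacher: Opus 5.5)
Your proposal is correct and follows essentially the same route as the paper: the Regime~1 entropy is one bit, the text-only predictive entropy is $H_2\bigl(\tfrac12(1+\pi_{t,0})\bigr)$ by Eq.~\eqref{eq:p-marg-alt}, $\pi_{t,0}>0$ comes from Assumption~\ref{ass:full-support}, and Lemma~\ref{lem:entropy-bounds} finishes the argument. Your optional Bayes-rule check of $\pi_{t,0}>0$ and your use of the lemma's ``if and only if'' clause to cover $\pi_{t,0}=1$ are harmless refinements. In fact, the same Markov factorization you write down gives $\pi_{t,0}=(1-\rho)+(2\rho-1)\Prb(Z_t=0\mid\mathcal H_t)\in[1-\rho,\rho]$, so the endpoint $\pi_{t,0}=1$ never arises.
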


\begin{proof}
Under Regime 1, the true conditional distribution is uniform over the binary vocabulary. Hence
\begin{equation}
H(p_{\mathrm{full}}\mid\mathcal H_t,Z_{t+1}=1)=1\quad\text{bit}.
\end{equation}
By Eq.~\eqref{eq:p-marg-alt}, the text-only marginal predictor assigns probability
\begin{equation}
\alpha=\frac12(1+\pi_{t,0})
\end{equation}
to the alternating continuation and probability $1-\alpha$ to the non-alternating continuation. By Assumption~\ref{ass:full-support}, $\pi_{t,0}>0$. Lemma~\ref{lem:entropy-bounds} therefore implies
\begin{equation}
H(p_{\mathrm{marg}}\mid\mathcal H_t)=H_2(\alpha)<1.
\end{equation}
Consequently,
\begin{equation}
\Delta H=1-H_2(\alpha)>0.
\end{equation}
\end{proof}

The theorem proves that false authority can arise even for a statistically ideal text-only predictor. The model is not making an optimization mistake; it is applying the correct marginal law given the information available in text. The error is epistemic: the current latent state is random, but the observed prefix is compatible with a deterministic structural regime. The model cannot distinguish a genuine invariant from a misleading local accident without additional information.

For this reason, the term \emph{sufficiency gap} is mathematically more precise than \emph{ergodicity gap} in the theorem itself. In the broader interpretation of \citet{corielli2026}, heterogeneous non-ergodic archives amplify this problem because they contain many local regimes whose textual signatures can be confused, transferred, or overgeneralized.

\section{Temperature Scaling Cannot Restore Latent Sufficiency}

A common response to overconfident generations is to increase sampling temperature, one of several decoding choices known to affect diversity and degeneration in neural text generation \citet{holtzman2020}. Temperature, however, reshapes the model-induced distribution; it does not restore omitted latent variables. In the binary case, if the unscaled text-only marginal probability of the alternating token is $\alpha>1/2$, temperature $T>0$ produces
\begin{equation}
\label{eq:temperature}
p_T(X_{t+1}=1-X_t\mid\mathcal H_t)
=\frac{\alpha^{1/T}}{\alpha^{1/T}+(1-\alpha)^{1/T}}
=\frac{1}{1+\left(\frac{1-\alpha}{\alpha}\right)^{1/T}}.
\end{equation}

\begin{theorem}[Temperature increases structural error in the deterministic regime]
\label{thm:temperature}
Assume the true continuation regime is Regime 0 and $\alpha\in(1/2,1)$. For every $T>1$, temperature scaling strictly increases the probability of sampling the structurally invalid token relative to $T=1$. Moreover, this change does not reduce the residual dependence of $X_{t+1}$ on the latent state.
\end{theorem}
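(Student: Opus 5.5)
The plan is to use the closed form \eqref{eq:temperature} and reduce the first claim to a monotonicity statement about a single ratio. In Regime 0, the structurally invalid token is the non-alternating token $X_{t+1}=X_t$, and its sampled probability under temperature $T$ is
\[
q(T):=1-p_T(X_{t+1}=1-X_t\mid\mathcal H_t)=\frac{r^{1/T}}{1+r^{1/T}},\qquad r:=\frac{1-\alpha}{\alpha}.
\]
Because $\alpha\in(1/2,1)$, we have $r\in(0,1)$. So $\log r<0$, and $r^{1/T}=\exp\bigl((\log r)/T\bigr)$ is strictly increasing in $T$ on $(0,\infty)$. The map $u\mapsto u/(1+u)$ is strictly increasing on $(0,\infty)$, so $q(T)$ is strictly increasing in $T$. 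In particular, $q(T)>q(1)=1-\alpha$ for every $T>1$, which is the first assertion. One could check this by differentiating $q$, but composing the two monotone maps is cleaner.

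Next I would make explicit why this counts as an error relative to the true law. Under Regime 0, the full conditional puts mass zero on $X_{t+1}=X_t$. Any $q(T)>0$ is therefore structural error, and $q(T)$ increases strictly toward $1/2$ as $T\to\infty$. As an optional quantitative addition, the cross-entropy or KL divergence from the true point mass, $-\log_2\bigl(1-q(T)\bigr)$, is also strictly increasing in $T$.

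For the second claim, I would formalize ``residual dependence'' as the conditional mutual information $I(X_{t+1};Z_{t+1}\mid\mathcal H_t)$ under the true joint law. The key observation is that temperature scaling is a deterministic post-processing of the predictive distribution $p_{\mathrm{marg}}(\cdot\mid\mathcal H_t)$. It is a function of $\alpha$ and $T$ alone and does not involve $Z_{t+1}$, so it leaves both the data-generating joint law of $(X_{t+1},Z_{t+1})$ given $\mathcal H_t$ and the posterior $\pi_{t,0}$ unchanged. That quantity is therefore invariant in $T$. If instead one reads the claim as being about the model's own predictive distribution, $p_T$ is $\mathcal H_t$-measurable, so its dependence on the latent state is still mediated entirely through $\mathcal H_t$. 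Since no new $\sigma$-algebra is added to the filtration, conditioning information cannot increase, and by a data-processing argument the gap $I(X_{t+1};Z_{t+1}\mid\mathcal H_t)$ cannot be reduced.

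I expect the main obstacle to be this second part, because ``residual dependence'' is not defined precisely in the statement. I will fix the mutual-information interpretation and state it explicitly, noting that it is positive whenever $\pi_{t,0}\in(0,1)$. Then I will argue invariance rather than attempt any inequality. The first part is routine monotonicity.
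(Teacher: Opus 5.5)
Your proposal is correct and follows essentially the same route as the paper. The first claim comes from composing the strictly increasing maps $T\mapsto r^{1/T}$ (with $r=(1-\alpha)/\alpha\in(0,1)$) and $u\mapsto u/(1+u)$; the second comes from the observation that temperature only post-processes the $\mathcal H_t$-measurable predictive distribution and never conditions on $Z_{t+1}$, and your reading of ``residual dependence'' as the $T$-invariant conditional mutual information $I(X_{t+1};Z_{t+1}\mid\mathcal H_t)$ simply makes precise what the paper states informally.
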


\begin{proof}
In Regime 0, the valid continuation is $1-X_t$ with probability one. The probability of sampling the invalid token at temperature $T$ is
\begin{equation}
\epsilon(T)=1-p_T(X_{t+1}=1-X_t\mid\mathcal H_t)
=\frac{r^{1/T}}{1+r^{1/T}},
\qquad r:=\frac{1-\alpha}{\alpha}\in(0,1).
\end{equation}
For $r\in(0,1)$, the function $T\mapsto r^{1/T}$ is strictly increasing for $T>0$. Since $u\mapsto u/(1+u)$ is strictly increasing for $u>0$, $\epsilon(T)$ is strictly increasing in $T$. Thus $T>1$ strictly increases the probability of structural error relative to $T=1$.

Temperature acts only on the predictive probabilities in Eq.~\eqref{eq:temperature}. It does not condition on $Z_{t+1}$ and therefore cannot make $X_{t+1}$ independent of $Z_{t+1}$ conditional on the text. It reshapes the marginal distribution but does not restore latent sufficiency.
\end{proof}

Lowering temperature has the opposite local effect: it sharpens the marginal distribution and can reduce errors in deterministic regimes. But this comes at the cost of intensifying false authority when the true regime is random. Temperature therefore moves the system along a confidence-diversity frontier. It does not convert a text-only marginal law into the full situation-specific conditional process.

\section{Contextual Augmentation by a $\gamma$-Oracle}

We now introduce an auxiliary signal $R_t\in\{0,1\}$ representing retrieved evidence, a tool output, a human annotation, a programmatic check, or another external observer; this abstraction covers retrieval-augmented and tool-augmented language-modeling systems \citet{lewis2020,borgeaud2022,karpas2022,schick2023,gao2023,yao2023}. The signal has fidelity $\gamma\in[1/2,1]$:
\begin{equation}
\Prb(R_t=k\mid Z_{t+1}=k)=\gamma,
\qquad k\in\{0,1\}.
\end{equation}
When $\gamma=1/2$, the signal is uninformative noise. When $\gamma=1$, it perfectly reveals the latent regime.

\begin{assumption}[Learned conditional availability]
\label{ass:learned-availability}
The augmented predictor has learned the conditional semantics of the auxiliary signal $R_t$. That is, it can treat $R_t$ as evidence about $Z_{t+1}$ according to the emission model above.
\end{assumption}

This assumption is essential. An external token may be informationally sufficient from the standpoint of an external statistician but useless for a model that has not learned how such tokens constrain continuation. In the terminology of \citet{corielli2026}, informational sufficiency and learned conditional availability are distinct requirements. RAG passages, tool outputs, and prompt-injected facts are useful only when they both contain relevant information and belong to a learned conditioning pattern.

\begin{theorem}[Bayesian updating under imperfect grounding]
\label{thm:gamma-update}
Let the true state be Regime 1 and suppose the observed textual history induces the text-only posterior $\pi_{t,0}=\Prb(Z_{t+1}=0\mid\mathcal H_t)$. If the auxiliary signal emits the corrective value $R_t=1$, then the augmented posterior probability assigned to the deterministic regime is
\begin{equation}
\label{eq:q-gamma}
q_{t,\gamma}
:=\Prb(Z_{t+1}=0\mid\mathcal H_t,R_t=1)
=\frac{(1-\gamma)\pi_{t,0}}
{(1-\gamma)\pi_{t,0}+\gamma(1-\pi_{t,0})}.
\end{equation}
Consequently, the augmented probability assigned to the alternating token is
\begin{equation}
\label{eq:alpha-gamma}
\alpha(\gamma)
=\frac12\left(1+q_{t,\gamma}\right)
=\frac12\left(1+\frac{(1-\gamma)\pi_{t,0}}
{(1-\gamma)\pi_{t,0}+\gamma(1-\pi_{t,0})}\right).
\end{equation}
\end{theorem}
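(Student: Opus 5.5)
The plan is a direct application of Bayes' rule. Treat the text-only posterior $\pi_{t,0}=\Prb(Z_{t+1}=0\mid\mathcal H_t)$ as the prior for the update on $R_t$. The emission model for the $\gamma$-oracle gives the two likelihoods
\[
\Prb(R_t=1\mid Z_{t+1}=0)=1-\gamma,\qquad \Prb(R_t=1\mid Z_{t+1}=1)=\gamma.
\]
The one modelling step that needs to be made explicit is that $R_t$ depends on the history only through $Z_{t+1}$, i.e.\ $R_t\perp\mathcal H_t\mid Z_{t+1}$. This is implicit in the way the signal is defined (its law is specified conditionally on $Z_{t+1}$ alone). Assumption~\ref{ass:learned-availability} is what licenses the augmented predictor to use exactly this likelihood. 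I expect stating this conditional independence cleanly to be the only real obstacle. The remaining steps are algebra.

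With that in hand, write
\[
\Prb(Z_{t+1}=0\mid\mathcal H_t,R_t=1)=\frac{\Prb(R_t=1\mid Z_{t+1}=0)\,\pi_{t,0}}{\sum_{k}\Prb(R_t=1\mid Z_{t+1}=k)\Prb(Z_{t+1}=k\mid\mathcal H_t)}.
\]
Substituting the likelihoods gives Eq.~\eqref{eq:q-gamma}. The denominator is strictly positive for $\gamma\in[1/2,1]$ and $\pi_{t,0}\in(0,1)$, because at least one of its two terms is positive, so the expression is well defined. At the boundary $\pi_{t,0}=1$ with $\gamma=1$ the conditioning event has probability zero, and I would exclude it or note it. The fact that the true state is Regime 1 enters only as the motivation for observing the corrective value $R_t=1$; the posterior formula itself is conditional on the observed data and does not depend on it.

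For Eq.~\eqref{eq:alpha-gamma}, repeat the mixture computation leading to Eq.~\eqref{eq:p-marg-alt}, now conditioning on the augmented information $\sigma(\mathcal H_t,R_t)$. Within each regime the emission law of $X_{t+1}$ is unchanged by $R_t$, i.e.\ $X_{t+1}\perp R_t\mid(\mathcal H_t,Z_{t+1})$, since the signal only carries information about the regime. Hence
\[
\Prb(X_{t+1}=1-X_t\mid\mathcal H_t,R_t=1)=q_{t,\gamma}\cdot 1+(1-q_{t,\gamma})\cdot\tfrac12=\tfrac12(1+q_{t,\gamma}).
\]
Substituting Eq.~\eqref{eq:q-gamma} then finishes the proof.
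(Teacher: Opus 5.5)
Your proposal is correct and matches the paper's proof: Bayes' rule with $\pi_{t,0}$ as prior and likelihoods $1-\gamma$ and $\gamma$ (justified by $R_t\perp\mathcal H_t\mid Z_{t+1}$), then the two-regime mixture giving $\tfrac12(1+q_{t,\gamma})$. You are slightly more careful than the paper, which uses $X_{t+1}\perp R_t\mid(\mathcal H_t,Z_{t+1})$ and positivity of the denominator without stating them.
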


\begin{proof}
By the law of total probability,
\begin{align}
&\Prb(X_{t+1}=1-X_t\mid\mathcal H_t,R_t=1) \\
&\qquad =q_{t,\gamma}\cdot 1+(1-q_{t,\gamma})\cdot\frac12
=\frac12(1+q_{t,\gamma}).
\end{align}
It remains to compute $q_{t,\gamma}$. By Bayes' theorem, using the usual posterior-updating logic of graphical and Bayesian models \citet{wainwright2008},
\begin{align}
q_{t,\gamma}
&=\frac{\Prb(R_t=1\mid\mathcal H_t,Z_{t+1}=0)\Prb(Z_{t+1}=0\mid\mathcal H_t)}
{\sum_{k\in\{0,1\}}\Prb(R_t=1\mid\mathcal H_t,Z_{t+1}=k)\Prb(Z_{t+1}=k\mid\mathcal H_t)}.
\end{align}
Conditional on $Z_{t+1}$, the signal emission is independent of $\mathcal H_t$. Thus
\begin{equation}
\Prb(R_t=1\mid Z_{t+1}=0)=1-\gamma,
\qquad
\Prb(R_t=1\mid Z_{t+1}=1)=\gamma.
\end{equation}
Substitution gives Eq.~\eqref{eq:q-gamma}, and Eq.~\eqref{eq:alpha-gamma} follows immediately.
\end{proof}

\section{The Contextual Dominance Threshold}

Assume now that the alternating textual history induces a dominant misleading prior:
\begin{equation}
\label{eq:dominant-prior}
\pi_{t,0}>\frac12.
\end{equation}
The external signal reverses posterior odds when the posterior assigned to Regime 0 falls below $1/2$:
\begin{equation}
q_{t,\gamma}<\frac12.
\end{equation}
Solving $q_{t,\gamma}=1/2$ using Eq.~\eqref{eq:q-gamma} gives
\begin{equation}
\label{eq:gamma-critical}
\gamma_{\mathrm{crit}}=\pi_{t,0}.
\end{equation}

\begin{corollary}[Contextual dominance threshold]
\label{cor:threshold}
Under Eq.~\eqref{eq:dominant-prior}, the corrective signal $R_t=1$ reverses the posterior odds in favor of the random regime if and only if
\begin{equation}
\gamma>\pi_{t,0}.
\end{equation}
If $\gamma<\pi_{t,0}$, the deterministic textual prior remains posteriorly dominant.
\end{corollary}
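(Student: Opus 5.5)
The plan is to use the closed form for $q_{t,\gamma}$ from Theorem~\ref{thm:gamma-update}, Eq.~\eqref{eq:q-gamma}, and convert the condition $q_{t,\gamma}<\tfrac12$ into an inequality between $\gamma$ and $\pi_{t,0}$ by clearing denominators.

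\textbf{Step 1: the denominator is positive.} Write $a=(1-\gamma)\pi_{t,0}$ and $b=\gamma(1-\pi_{t,0})$, so that $q_{t,\gamma}=a/(a+b)$. Assumption~\ref{ass:full-support} gives $\pi_{t,0}>0$, and we may take $\pi_{t,0}<1$, since otherwise no finite-fidelity signal can move the posterior. Because $\gamma\ge 1/2>0$, we get $b>0$. Also $a\ge 0$. Hence $a+b>0$ and $q_{t,\gamma}$ is well defined.

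\textbf{Step 2: clear the denominator.} With $a+b>0$,
\[
q_{t,\gamma}<\tfrac12 \iff 2a<a+b \iff a<b.
\]
That is, the condition is $(1-\gamma)\pi_{t,0}<\gamma(1-\pi_{t,0})$. Expanding both sides, the cross term $\gamma\pi_{t,0}$ cancels, and this is equivalent to $\pi_{t,0}<\gamma$. Every step is an equivalence, so this proves the ``if and only if'' claim.

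\textbf{Step 3: the second sentence.} The same manipulation gives $q_{t,\gamma}>\tfrac12 \iff \gamma<\pi_{t,0}$. So for $\gamma<\pi_{t,0}$ the posterior still favours Regime 0, and the deterministic textual prior remains dominant. At $\gamma=\pi_{t,0}$ the odds are exactly even, which recovers Eq.~\eqref{eq:gamma-critical}.

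\textbf{Role of Eq.~\eqref{eq:dominant-prior}.} The hypothesis $\pi_{t,0}>1/2$ is not used in the algebra. It only ensures that, before the signal, the text-only posterior favours Regime 0, so that ``reversal'' is meaningful. It also guarantees $\gamma_{\mathrm{crit}}=\pi_{t,0}$ lies in the admissible range $(1/2,1]$.

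\textbf{Main obstacle.} The algebra is routine. The only real care needed is the sign and positivity of the denominator, which justifies multiplying through without flipping the inequality. A secondary point is the degenerate edge case $\pi_{t,0}=1$. I would handle it by noting that then $q_{t,\gamma}=1$ for $\gamma<1$, so neither side of the equivalence holds and the claim is still true. At $\gamma=1$ the expression is $0/0$, so this case is excluded.
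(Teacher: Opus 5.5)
Your proof is correct and is the same argument as the paper's: the paper also clears the denominator in Eq.~\eqref{eq:q-gamma} to get $(1-\gamma)\pi_{t,0}<\gamma(1-\pi_{t,0})$, then cancels the cross term to obtain $\gamma>\pi_{t,0}$. Your positivity check and your treatment of $\pi_{t,0}=1$ are details the paper leaves implicit, and that edge case cannot actually arise: under the latent Markov dynamics $\pi_{t,0}$ is a convex combination of $\rho$ and $1-\rho$, so $\pi_{t,0}\le\rho<1$.
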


\begin{proof}
From Eq.~\eqref{eq:q-gamma}, the inequality $q_{t,\gamma}<1/2$ is equivalent to
\begin{equation}
(1-\gamma)\pi_{t,0}<\gamma(1-\pi_{t,0}),
\end{equation}
which simplifies to $\gamma>\pi_{t,0}$. Equality holds at $\gamma=\pi_{t,0}$.
\end{proof}

The threshold in Corollary~\ref{cor:threshold} is not a thermodynamic phase transition. The posterior $q_{t,\gamma}$, the predictive probability $\alpha(\gamma)$, and the entropy $H_2(\alpha(\gamma))$ vary continuously in $\gamma$ on $(1/2,1]$ except at degenerate boundary cases of zero denominator. The threshold is instead a posterior-odds crossing. It marks the point at which the external signal becomes stronger than the archive-induced structural prior; in repeated-information settings, posterior convergence and merging of beliefs require stronger conditions on the information stream \citet{blackwell1962}.

\begin{corollary}[Gap reduction versus gap closure]
\label{cor:closure}
For $\gamma>\pi_{t,0}$, the corrective signal reduces the sufficiency gap by shifting posterior mass toward the true random regime. However, the gap is fully closed only when $\gamma=1$ or, more generally, when $q_{t,\gamma}=0$.
\end{corollary}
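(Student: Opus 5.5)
The plan is to define the augmented gap by analogy with Theorem~\ref{thm:sufficiency-gap},
\[
\Delta H(\gamma):=1-H_2\bigl(\alpha(\gamma)\bigr),\qquad \alpha(\gamma)=\tfrac12(1+q_{t,\gamma}),
\]
and to observe that the text-only gap of Theorem~\ref{thm:sufficiency-gap} is exactly $\Delta H$ evaluated at posterior $\pi_{t,0}$, which corresponds to $\gamma=1/2$. Both claims, reduction and closure, then reduce to statements about the map $\gamma\mapsto q_{t,\gamma}$ composed with $\pi\mapsto 1-H_2((1+\pi)/2)$.

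\textbf{Step 1: $q_{t,\gamma}$ is strictly decreasing in $\gamma$.} I would rewrite Eq.~\eqref{eq:q-gamma} in odds form,
\[
\frac{q_{t,\gamma}}{1-q_{t,\gamma}}=\frac{1-\gamma}{\gamma}\cdot\frac{\pi_{t,0}}{1-\pi_{t,0}}.
\]
Since $(1-\gamma)/\gamma$ is strictly decreasing on $[1/2,1]$ and equals $1$ at $\gamma=1/2$, it follows that $q_{t,\gamma}<\pi_{t,0}$ for every $\gamma>1/2$, which covers every $\gamma>\pi_{t,0}$ because $\pi_{t,0}>1/2$. It also follows that $q_{t,\gamma}=0$ exactly when $\gamma=1$, given $\pi_{t,0}\in(0,1)$ by Assumption~\ref{ass:full-support} and Eq.~\eqref{eq:dominant-prior}.

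\textbf{Step 2: reduction.} Lemma~\ref{lem:entropy-bounds} says $\pi\mapsto H_2((1+\pi)/2)$ is strictly decreasing on $(0,1)$ and, by continuity, on $[0,1)$. Combined with Step 1, this gives $H_2(\alpha(\gamma))>H_2(\alpha)$, so $\Delta H(\gamma)<\Delta H$. This is the precise meaning of shifting posterior mass toward the true regime. Under the corollary's hypothesis $\gamma>\pi_{t,0}$, Corollary~\ref{cor:threshold} additionally gives $q_{t,\gamma}<1/2$, so the odds are reversed and the reduction is substantial. I would note, though, that the strict reduction itself needs only $\gamma>1/2$.

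\textbf{Step 3: closure.} By the ``if and only if'' part of Lemma~\ref{lem:entropy-bounds}, $\Delta H(\gamma)=0$ if and only if $q_{t,\gamma}=0$. From Eq.~\eqref{eq:q-gamma}, with $\pi_{t,0}>0$ the numerator $(1-\gamma)\pi_{t,0}$ vanishes only when $\gamma=1$. At $\gamma=1$ the denominator equals $1-\pi_{t,0}>0$, so the expression is well defined. Hence for every $\gamma<1$ the gap stays strictly positive.

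The ``more generally'' clause, $q_{t,\gamma}=0$, is meant to cover equivalent verification mechanisms outside the binary $\gamma$-model. I would handle it directly through the Lemma's characterization rather than through $\gamma$.

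The main obstacle is interpretive rather than computational: the statement does not define the augmented gap, so I have to commit to $\Delta H(\gamma)$ conditioned on $(\mathcal H_t,R_t=1)$. I also need to justify comparing it with the text-only gap, which amounts to identifying that gap with the case $\gamma=1/2$.
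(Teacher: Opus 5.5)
Your proof is correct. The closure half matches the paper: both use the ``if and only if'' in Lemma~\ref{lem:entropy-bounds}, so the gap vanishes exactly when $q_{t,\gamma}=0$. Both then read off from Eq.~\eqref{eq:q-gamma} that the numerator $(1-\gamma)\pi_{t,0}$ vanishes only at $\gamma=1$.

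The reduction half takes a different route. The paper goes through Corollary~\ref{cor:threshold}. From $\gamma>\pi_{t,0}$ it gets $q_{t,\gamma}<1/2$, hence $\alpha(\gamma)<3/4$. It then compares this with a posterior-dominant prediction; the text-only $\alpha=(1+\pi_{t,0})/2$ exceeds $3/4$ by Eq.~\eqref{eq:dominant-prior}. The translation into entropy is left implicit.

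You instead use the odds form $\frac{q_{t,\gamma}}{1-q_{t,\gamma}}=\frac{1-\gamma}{\gamma}\cdot\frac{\pi_{t,0}}{1-\pi_{t,0}}$ to get $q_{t,\gamma}<\pi_{t,0}$ directly. You then apply Lemma~\ref{lem:entropy-bounds} to turn this into a strict decrease of the explicitly defined gap $1-H_2(\alpha(\gamma))$, measured against the $\gamma=1/2$ (text-only) case.

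Your route gains two things. It shows the reduction holds for every $\gamma>1/2$, so it is a property of any informative signal rather than of odds reversal. It also makes precise what ``reduces the gap'' means. The paper's route instead gives the concrete bound $\alpha(\gamma)<3/4$ and ties the reduction to the threshold the corollary is built around.

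One small correction: you need $\pi_{t,0}<1$ for the odds to be finite and for the denominator at $\gamma=1$ to be positive. This does not follow from Eq.~\eqref{eq:dominant-prior}, which only gives $\pi_{t,0}>1/2$. It follows from $\rho<1$ in the latent chain, since $\pi_{t,0}$ is a convex combination of $\rho$ and $1-\rho$. Alternatively, it follows from full support of the prior together with the fair-coin regime giving every history positive likelihood. The paper states this condition explicitly.
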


\begin{proof}
When $\gamma>\pi_{t,0}$, Corollary~\ref{cor:threshold} gives $q_{t,\gamma}<1/2$, so
\begin{equation}
\alpha(\gamma)=\frac12(1+q_{t,\gamma})<\frac34.
\end{equation}
This is closer to the true random-regime probability $1/2$ than a posterior-dominant deterministic prediction. However, if $q_{t,\gamma}>0$, then $\alpha(\gamma)>1/2$ and $H_2(\alpha(\gamma))<1$, so the entropy gap remains positive. Eq.~\eqref{eq:q-gamma} yields $q_{t,\gamma}=0$ when $\gamma=1$ and $\pi_{t,0}<1$. In that case $\alpha(1)=1/2$ and the entropy equals one bit, matching the true random-regime conditional entropy.
\end{proof}

This result gives a precise interpretation of retrieval and tool failures and is also consistent with Bayesian interpretations of in-context learning \citet{xie2022}. Adding material to a context window is not enough. The material must be sufficiently reliable relative to the model's text-induced prior, and the model must have learned to treat that material as evidence. Otherwise the learned archive can absorb, discount, or reinterpret the corrective signal.

\section{External Observers, Verification, and Self-Blindness}

The preceding results clarify why an autoregressive sequence model is structurally limited as a self-evaluator. Its internal confidence is computed from the distribution available to it. Under the false-authority event of Theorem~\ref{thm:sufficiency-gap}, the model's internal entropy is
\begin{equation}
H_{\mathrm{internal}}=H(p_{\mathrm{marg}}\mid\mathcal H_t)=H_2\!\left(\frac12(1+\pi_{t,0})\right).
\end{equation}
This value is a function of the text-only posterior. It is not a function of the realized latent state except through whatever information about that state is already encoded in the text.

Thus, the model cannot compute the sufficiency gap from its own text-only distribution alone. The quantity
\begin{equation}
\Delta H
=H(p_{\mathrm{full}}\mid\mathcal H_t,Z_{t+1})
-H(p_{\mathrm{marg}}\mid\mathcal H_t)
\end{equation}
requires access to the latent-state-conditioned law. That information has been integrated out. Softmax confidence, predictive entropy, or logit margins can therefore be calibrated with respect to historical textual patterns while remaining miscalibrated with respect to world-state correspondence, a distinction that connects to the hallucination and uncertainty-estimation literatures \citet{ji2023,kadavath2022}.

This is the formal role of an external observer. The observer may be a human expert, a retrieval system, a database query, a compiler, a theorem prover, a calculator, a laboratory measurement, or another model with access to different evidence; instruction-following and orchestration mechanisms can help route such evidence into the interaction, but they do not by themselves remove the need for verification \citet{ouyang2022}. The crucial property is not that the observer is non-machine, but that it is structurally decoupled from the same text-only marginalization failure.

If an auxiliary architecture can reliably access or verify the relevant latent state, then the sequence model should not be treated as an autonomous epistemic authority. The robust component should constrain, verify, reject, or govern generation. Whether it should replace the sequence model entirely depends on the task. In many applications, language modeling remains useful as an interface, summarizer, or proposal generator, while the observer supplies grounding and verification. In high-stakes settings, however, the decision rule should be organized around the verifier or external state-access mechanism rather than around the unaudited next-token distribution.

\section{Relation to Programming, RAG, and Tool Use}

The binary toy model is intentionally minimal, but it captures the two-stage criterion developed in \citet{corielli2026}. First, the model must identify the relevant local regime from the prefix. Second, within that regime, the prefix and any augmentation must make the latent circumstances approximately irrelevant:
\begin{equation}
I_k(X_{t+1};Z_t^{(k)}\mid X_{\le t},R_t,A_t)\approx 0,
\end{equation}
where $R_t$ denotes retrieved material and $A_t$ denotes tool output.

Programming is a favorable domain because syntax, previous code, tests, error messages, documentation, and execution traces often textualize much of the relevant latent state. This does not make generated code automatically correct, but it makes the sufficiency condition more plausible than in open-world factual discourse. Moreover, compilers, unit tests, type checkers, and runtime execution can act as external observers whose objectives are not merely next-token likelihood.

RAG is useful under the same condition, as retrieval-augmented systems are best understood here as attempts to textualize missing state rather than as automatic truth guarantees \citet{lewis2020,borgeaud2022}. Retrieved material helps when it reduces residual dependence on omitted circumstances. It fails when it merely adds topical text, citation-like surface form, or misleadingly similar context without capturing the latent facts needed for the continuation. Tool use is stronger than retrieval when the tool has direct access to non-textual state or executes a formal procedure, as in program-aided or action-augmented language-modeling systems, but it is still useful only if the tool is appropriate for the latent variable at issue and the model has learned how to condition on the result \citet{gao2023,schick2023,yao2023}.

This distinction is especially important for prompt injection of facts or rules. A prompt can select among learned conditional behaviors; it does not by itself install a new conditional law, which is consistent with evidence that in-context examples often work by activating or selecting learned patterns rather than by unconstrained rule acquisition \citet{brown2020,min2022,wei2022,xie2022}. Even when an added context token is informationally sufficient in principle, the model must have learned how that kind of token constrains continuation. This is exactly the role of Assumption~\ref{ass:learned-availability} in the toy model.

The same sufficiency distinction also matters for recursive training and synthetic contamination. If generated text is later treated as ordinary training evidence, the next model is no longer trained only on samples from the human text process but also on samples from a model-induced distribution. When the generating model operated under an insufficient prefix, the synthetic sample may preserve a plausible marginal continuation rather than the full situation-specific conditional. This mechanism complements recent analyses of self-consuming generative loops and model collapse, where recursively generated data can reduce diversity, erase distributional tails, or move the training target away from the original data-generating process \citet{shumailov2023,shumailov2024,alemohammad2024}.

\section{Decision-Theoretic Implications}

In practice, inferential errors should be evaluated through a loss function. A wrong continuation in a poem, a wrong code suggestion, a wrong medical recommendation, and a wrong legal interpretation do not have the same cost. The problem formalized here is that a text-only sequence model cannot generally estimate the relevant situation-specific probability of error from its internal distribution alone.

Let $L(a,z)$ denote the loss of taking action or emitting answer $a$ in latent state $z$. A rational decision rule requires an estimate of
\begin{equation}
\E[L(a,Z_{t+1})\mid \mathcal H_t,\text{available evidence}].
\end{equation}
If the available evidence is only $\mathcal H_t$, and if $\mathcal H_t$ is not sufficient for $Z_{t+1}$, then the model's token probabilities can be poor proxies for expected loss. External observers are therefore not optional embellishments in high-stakes environments. They are the mechanisms through which the missing state variables, verification procedures, and domain-specific loss assessments enter the decision system.

\section{Conclusion}

This paper provides a formal companion to \citet{corielli2026} by showing that false authority can emerge even under an ideal text-only predictor. The binary mixed-regime process separates statistical optimality relative to a marginal archive from epistemic usefulness in a realized latent state. The model's overconfidence is not caused by insufficient capacity or bad optimization. It follows from conditioning on an insufficient statistic.

Temperature scaling cannot solve this problem because it modifies the decoding distribution without restoring the omitted latent circumstances. External grounding can help, but only under two conditions. First, the grounding signal must be informative enough to overcome the misleading posterior induced by the textual history. In the binary model, this posterior-odds threshold is $\gamma>\pi_{t,0}$. Second, the model must possess learned conditional availability: it must know how to use the grounding signal as evidence. Otherwise, the signal may be present in the prompt but fail to function as a conditioning variable.

The strongest conclusion is not that sequence models are useless. It is that their usefulness is conditional. They are most reliable in local islands where the textual prefix identifies the regime and textualizes the relevant latent state. They are least reliable when the correct continuation depends on omitted facts, unobserved circumstances, or verification procedures not represented in the prompt. In such cases, safe deployment requires structurally decoupled external observers, retrieval systems, tools, formal checks, or human review. Without them, the system does not take a calculated risk; it acts under an unmeasured sufficiency failure.

\bibliographystyle{plainnat}

\end{document}